\numberwithin{equation}{section}
\theoremstyle{plain}
\newtheorem{thm}{Theorem}[section]
\theoremstyle{definition}
\newtheorem{definition}{Definition}[section]
\theoremstyle{remark}
\begin{document}

\runningtitle{Categorical Variable Structure}
\twocolumn[
\aistatstitle{Exploiting Categorical Structure Using Tree-Based Methods}
\aistatsauthor{ Brian Lucena }
\aistatsaddress{ Numeristical } ]
%\aistatsauthor{Anonymous}
%\aistatsaddress{Anonymous} ]

\begin{abstract}
Standard methods of using categorical variables as predictors either endow them with an ordinal structure or assume they have no structure at all.  However, categorical variables often possess structure that is more complicated than a linear ordering can capture. We develop a mathematical framework for representing the structure of categorical variables and show how to generalize decision trees to make use of this structure.  This approach is applicable to methods such as Gradient Boosted Trees which use a decision tree as the underlying learner.  We show results on weather data to demonstrate the improvement yielded by this approach.
 \end{abstract}

\section{INTRODUCTION}
Categorical variables are ubiquitous in practical data sets, but have received less attention in theoretical treatments of algorithms.  While numerical variables have lots of beautiful properties due to the mathematics of the real line, their categorical counterparts have been crudely forced into the same structure.  The literature typically assumes that categorical variables are either \emph{ordinal} or \emph{unordered}, with the latter taken to mean that there is no structure at all (e.g.~\cite{trevor2009elements} pp. 492-494).

However, while the real numbers naturally have a linear structure, categorical variables may have various kinds of structure.  Here are some examples:

\begin{enumerate}
\item The month of the year (e.g. \{Jan, Feb, $\ldots$ , Dec\}) has a circular structure.
\item The U.S. States have a geographical structure based on which states border one another as illustrated in Figure~\ref{US_graph_fig}.
\item The CIFAR-10 outcome variable has structure in that 4 of the values represent vehicles (airplane, automobile, ship, truck) and the other 6 represent animals (bird, cat, deer, dog, frog, horse).
\end{enumerate}

\begin{figure}[t]
\centering
\includegraphics[scale=0.4]{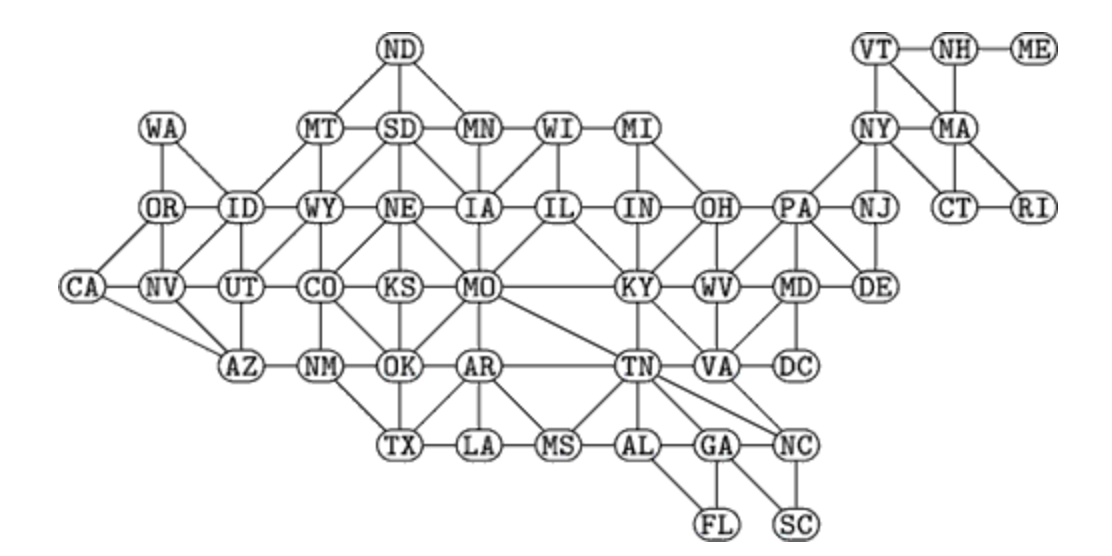}
\caption{Adjacency graph of the lower 48 states plus DC.(~\cite{USgraph})}
\label{US_graph_fig}
\end{figure}

In all of the above cases, the structure of the possible values of the random variable (sometimes called ``levels") contains valuable information that can improve predictive performance. Ignoring this structure entirely and treating the levels as having no relationship whatsoever fails to capture this signal.  Simply ordering them, while often an improvement, is insufficient to capture this signal most effectively.    However, the two most common ways of handling categorical variables follow these approaches.  

The first method is creating dummy variables (also called ``one-hot encoding") where each level of the variable is associated with a binary variable.  This corresponds to the ``unordered" view - it assumes that that the values are different, but with with no structure at all.  Thus it is unable to use the information in that structure: for example, that Connecticut borders Massachusetts and thus is likely to have similar weather, or political views. Moreover, it is clumsy in that it creates many different variables to represent a single concept, and each variable captures a very narrow piece of information.  

The second method is to map the levels onto the real line, thereby imposing an ordinal structure on the variable.  This approach is generally an improvement over the one-hot encoding approach.  It can be very effective if the structure of the variable is truly ordinal, or close to it.  However, it is insufficient for the scenarios described above, where the structure is more complicated than a strict ordering.

To probe further the notion of ``categorical variable structure", let us consider a simple scenario where we have an integer-valued predictor $X$, a binary target variable $Y$ and we wish to predict $P(Y=1|X=x)$.  Generally, approaches to this problem are based on the assumption that if $x_1$ is ``close to" $x_2$, then $P(Y=1|X=x_1)$ is ``close to" $P(Y=1|X=x_2)$.  This is typically done by appealing to mathematical notions of continuity, i.e. modeling $f(x) = P(Y=1|X=x)$ as a continuous function.  However, this itself is an assumption about the structure of the relationship between $X$ and $Y$.  For example, if $P(Y|X=x)$ depended only on whether $x$ was odd or even, this would be a poor modeling assumption, and such models would do a poor job of capturing the information that $X$ contains about $Y$.

Thus, for categorical variables, we want to capture a notion of ``proximity" between the different levels of the variable in order to be able to make an analogous assumption that closeness in the values of $X$ implies closeness of $P(Y|X=x)$.  

In the next section, we will explore how Classification and Regression Trees (CART) (also known as decision trees) provide a useful framework for defining a notion of proximity that is not restricted by the mathematical structure of the real line.  We will see how abstracting some aspects of the decision tree algorithm will permit us to define categorical structure and utilize it effectively.

\section{DECISION TREES}
\label{sec:dt}
    Decision trees (~\cite{breiman1984classification},~\cite{quinlan1986induction}) have been a powerful force in machine learning.  They have been especially effective as the underlying learner in methods such as Random Forests (~\cite{breiman2001random},~\cite{nc:Amit+Geman:1997}) or Gradient Boosting (~\cite{Friedman:2002:SGB:635939.635941},~\cite{Friedman00greedyfunction}).  Elegant handling of categorical variables has been a primary feature of recent Gradient Boosting packages such as Catboost (~\cite{CatBoostNeurIPS},~\cite{dorogush2018catboost}) and LightGBM (~\cite{ke2017lightgbm}).  These approaches are no doubt an improvement over their more naive counterparts.  However, they still lack the ability to use prior information about the natural structure of the levels of the categorical variable.  Rather, they attempt to use the data to either find an appropriate linear ordering or resort to exhaustively searching all possible splits.  Another class of approaches (~\cite{JieruiXie-LD:2010},~\cite{stanfill_waltz}, ~\cite{cheng_et_al}) attempt to learn a distance function between categories and thereby incorporate that information into supervised learning methods.

\subsection{Decision Trees as ``on-the-fly" aggregators}
To provide one perspective on why the decision tree is effective, we will explore how decision trees act as ``on-the-fly" aggregators of predictor values.  This is by no means the only advantage of decision trees, but it will motivate our approach to categorical variables.  Consider again the case where $X$ is one-dimensional, numerical, and $Y$ is binary.  Further assume that $X$ could only take on finitely many values.  As the size of our training data grows to infinity, for each possible value $x_i$ of $X$ we would have as many observations as we wish from the distribution $P(Y|X=x_i)$.  Therefore, we could estimate $P(Y|X=x_i)$ directly simply by counting the number of times $Y=1$ and $Y=0$ among the data points where $X=x_i$.  With more and more data, we could estimate the exact value with arbitrarily fine precision for all $x_i$.  We refer to this extreme case as the \emph{siloed} approach.  That is, in estimating $P(Y|X=x_i)$ we do not consider any of the training observations where $X \neq x_i$.

However, with a smaller data set, the siloed approach may not be the best.  Rather, we may want to \emph{pool} nearby values of $x_i$ to achieve variance reduction at the cost of introducing bias.  One approach could be to decide which values to pool together in advance of seeing any data.   For example, survey data will often define an age-group (18-24, 25-39, etc.).  Similarly, when U.S. state is used as a predictor, it is common to group them into regions such as ``Northeast" or ``Midwest".  The decision tree algorithm offers a more sophisticated approach that lets the data inform the pooling decision.  Specifically, we specify ahead of time which sets of $x_i$ it \emph{could} make sense to pool, and then use a greedy algorithm to explore the space of possible aggregations.

How do we decide which sets it ``would make sense to pool"?  Since X is one-dimensional and numerical, and motivated by the belief that $P(Y|X=x_i)$ is continuous, we determine that intervals of the real line form ``reasonable" choices of sets of $x_i$ to average together.  As discussed before, our choice to consider only the intervals as appropriate sets to average over represents a statement about our prior beliefs on the form of $P(Y|X)$.  In this way, the intervals form our choice of ``averageable" sets.
Thus, we wish to partition the state space of X into such sets in a manner that maximizes some metric that balances bias and variance. 

How do we search the space of partitions?  Since $X$ can only take on finitely many different values, there are a finite number of partitions.  A simple counting argument demonstrates that if $X$ takes on $m$ different values, there are $2^{(m-1)}-1$ total partitions into intervals.\footnote{The number of partitions of a set of $m$ elements with no structure restrictions at all is the $m$th Bell number, which is much much bigger.}  To avoid this exponential search the standard decision tree uses a greedy approach.  Specifically, we find the \emph{binary} partition that maximizes our metric, and then recursively look for binary partitions of those components.  

Aside from the computational expense of searching all partitions, there is another reason for doing the stagewise greedy approach. When we have multiple predictor variables, it may be the case that there is more to be gained by considering coarse partitions of several different predictors rather than refining a single predictor to its maximum effectiveness.  For this reason, it makes sense to proceed conservatively, maintaining larger data sets at each node in the tree.

Note that the decision tree only considers the partitions that are \emph{maximally coarse} -- that is, those that are not themselves a refinement of another partition.  With the intervals as the only averageable sets, the maximally coarse partitions will be precisely those of size 2.  

The above informally illustrates how to re-frame the decision tree algorithm in terms of averageable sets, partitions, maximally coarse partitions, and so forth.  These notions enable us to generalize the decision tree framework to accommodate the kinds of structured categorical variables described at the beginning of the paper.  

\section{THOUGHT EXPERIMENTS ABOUT ``AVERAGEABLE" SETS}
Consider the following scenarios.  In each case, $X$ takes on finitely many possible values and $Y$ is a binary outcome whose distribution depends on the value of $X$.  However, as we will see, the appropriateness of whether and how to use a decision tree to determine the best pooling (or at least, a good pooling) varies considerably.

\begin{itemize}
\item \emph{Example 1: Die and coins.} Each observation corresponds to the roll of a die and the flip of a coin. Let there be 6 coins, indexed 1 to 6, each with a different bias $p_i$ (e.g. each $p_i$ is drawn independently from a Uniform $[0,1]$ distribution).  Consider a data-generation process whereby we roll a 6-sided die $X$ to generate a value $x$ and then flip the corresponding coin to generate the corresponding $Y|X=x$.

\item \emph{Example 2: Rain by Month.} Each observation corresponds to a random choice of day in the past 20 years.  Let $X$ be the month in which the day in question occurred, and let $Y$ be an indicator of whether or not it rained in San Francisco on that day.

\item \emph{Example 3: Rain by County.} Each observation corresponds to a random choice of day in the past 20 years and a random choice of county among the 58 counties in California.  Let $X$ be the county in question and let $Y$ be an indicator of whether or not it rained in that county on that day.

\end{itemize}

In example 1, it \emph{never} makes sense to pool different values of $x_i$ together for estimating $P(Y|X=x)$.  The only reasonable choice is to silo the data for each different possible value of $X$. So the only sets that it ``makes sense to aggregate" are the sets containing one single value.  There is actually some nuance here -- if the $p_i$ were drawn from a distribution with an unknown parameter, it is possible that pooling could help.  However, with $p_i \mathtt{\sim} U[0,1]$ drawn independently, there is no information that the different coin flips could contain about the other. 

In example 2, we should consider grouping any subset of consecutive months \emph{including} those that cross from December to January.  A flawed approach (though common in practice) would be to map the months to their corresponding number (i.e. January$\rightarrow$1, February$\rightarrow$2, ..., December$\rightarrow$12) and then use a standard decision tree on this numerical representation.  However, if the rainy season goes from November to March (as it does in San Francisco) we would be eliminating the possibility of grouping these 5 months together.  To put things more precisely, there is a natural \emph{circular} structure to this variable that should be considered when deciding which splits (partitions) to evaluate.  

In example 3, we expect some regionality in the probability of rainfall.  A dry area is more likely to neighbor another dry area and similarly for wet areas.  It is unclear from previous literature how to apply a decision tree to this variable outside of the standard ordinal or unordered approaches.  Our previous discussion suggests a possible method: let any contiguous group of counties be an averageable set and then consider maximally coarse partitions of the state space into these averageable sets.  For example, the decision tree would consider splitting the coastal counties versus inland counties, or the northern counties versus the southern counties.

\section{STRUCTURE REPRESENTATION IN CATEGORICAL VARIABLES}
Here we begin the technical definitions used to represent the structure of a categorical predictor variable. Our goal is to define a notion of structure for categorical variables based on sets of values that it ``could make sense to average over".  Subsequently, we will generalize the decision tree to accommodate these structured categorical variables.  In doing so, we will see that our generalized algorithm becomes the standard decision tree when dealing with a linearly ordered structure. 

To begin, we capture the notion of ``averageable" sets with a mathematical object that we call a \emph{terrain}.

\begin{definition}
For a finite set $V$, define a \emph{terrain}  $\mathcal{A}$ on $V$ to be a set of subsets of $V$ such that $V \not\in \mathcal{A}$, $\emptyset \not\in \mathcal{A}$ and for all $v \in V, \{v\} \in \mathcal{A}$.  In other words,   $\mathcal{A}$ contains all singleton subsets, but neither the empty set nor the full set $V$.
 \end{definition}

A terrain can be thought of as a hypergraph on $V$ where each ``hyper-edge" represents a (proper) subset of values of $V$ that is ``averageable" in the sense described in the previous sections.  We require all singleton subsets to be included since it always makes sense to average over a single value.  

Given a set $V$, denote by $Part(V)$ the set of partitions of the set $V$.  In other words $\mathcal{P} = \{P_1, P_2, \ldots, P_k\} \in Part(V)$ if and only if each $P_i$ is a non-empty set and every $v \in V$ is contained by exactly one $P_i$.  Furthermore we denote the size of the partition (in this case, $k$) by $|\mathcal{P}|$.

\begin{definition}
A partition $\mathcal{P} = \{P_1, P_2, \ldots, P_k\} \in Part(V)$ is said to \emph{conform} to a terrain $\mathcal{A}$ over $V$ if $P_i \in \mathcal{A}$ for all $i$ in $1,2,\ldots, k$.
\end{definition}

\begin{definition}
Given two partitions $\mathcal{P}_1, \mathcal{P}_2 \in Part(V)$, we say that $\mathcal{P}_1$ is a \emph{coarsening} of $\mathcal{P}_2$ if $|\mathcal{P}_1| < |\mathcal{P}_2|$ and for all $S_2 \in \mathcal{P}_2$ there exists $S_1 \in \mathcal{P}_1$ such that $S_2 \subseteq S_1$.
\end{definition}
 
\begin{definition}
Given a finite set $V$ and a set $\mathcal{S} \subseteq Part(V)$, we say that $\mathcal{P}^{*} \in \mathcal{S}$ is \emph{maximally coarse in $\mathcal{S}$} if there does not exist $\mathcal{P} \in \mathcal{S}$ such that $\mathcal{P}$ is a coarsening of  $\mathcal{P}^{*} $.
\end{definition}

\begin{definition}
 Let $\mathcal{A}$ be a terrain on $V$. The \emph{restriction}  of a terrain $\mathcal{A}$  to a subset $B \subset V$ is a terrain on $B$ defined as $\{A \in \mathcal{A} : A \subset B\}$ and denoted $\mathcal{A}_B$.
 \end{definition}

\section{DEFINING TERRAINS WITH GRAPHS}
It is frequently tedious to specify a terrain by exhaustively listing each of its elements.  Graphs can provide a convenient way to capture the structure of the  ``levels" of a random variable.  This is especially true if there is a spatial aspect to the relationship between levels.  Consider the case where we have a random variable that represents which US state a person resides in.  For simplicity, just consider the lower 48 states plus the District of Columbia as the possible values.  We may wish to define our terrain to include any contiguous collections of states.  This can be easily done by letting $G= (V,E)$ be the corresponding adjacency graph and then defining a terrain to be the connected sets in $G$ (excluding the set $V$).

\begin{definition}
Let $G = (V,E)$ be an undirected graph and let $A \subset V$.  We say the set $A$ is connected in $G$ if the subgraph induced by $A$ is connected.
\end{definition}

\begin{definition}
Let $G=(V,E)$ be a connected, undirected graph.  Define the \emph{terrain induced by the graph G}, denoted $\mathcal{T}(G)$ to be such that $A \in \mathcal{T}(G)$ if and only if $A \mbox{ is connected in } G \mbox { and } A \neq V$.
\end{definition}

If our terrain is defined by a graph in this way we will ensure that the maximally coarse partitions are binary.  This property will be useful when we present the generalized version of the decision tree, as it will guarantee that the resulting decision trees are binary.  We make the precise mathematical statement below.

\begin{thm}
\label{thm:max_coarse_binary}
Let $G=(V,E)$ be a connected, undirected graph, and let $\mathcal{T}(G)$ be the terrain induced by $G$.  Let $\mathcal{P}$ be a maximally coarse partition that conforms to $G$.  Then $|\mathcal{P}| = 2$. 
\end{thm}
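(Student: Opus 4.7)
The plan is to argue by contradiction: assume $|\mathcal{P}| \geq 3$ and produce a strictly coarser conforming partition by merging two parts. The lower bound $|\mathcal{P}| \geq 2$ comes for free, because $V \notin \mathcal{T}(G)$ means the trivial partition $\{V\}$ does not conform.

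The main construction is a \emph{quotient graph} $H$ whose vertices are the blocks $P_1,\dots,P_k$ of $\mathcal{P}$, with $P_i$ and $P_j$ joined in $H$ whenever $G$ contains an edge with one endpoint in $P_i$ and the other in $P_j$. First I would verify that $H$ is connected: given any two blocks $P_i,P_j$, pick $u \in P_i$ and $v \in P_j$, take a $u$--$v$ path in $G$ (which exists because $G$ is connected), and observe that the successive blocks visited by the vertices of the path form a walk in $H$ from $P_i$ to $P_j$. Since $|\mathcal{P}| \geq 2$, a connected $H$ on at least two vertices must contain an edge, so some pair of blocks $P_i,P_j$ is joined in $H$.

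Next I would merge this pair: let $\mathcal{P}' = (\mathcal{P} \setminus \{P_i,P_j\}) \cup \{P_i \cup P_j\}$. Two checks are needed for $\mathcal{P}'$ to conform to $\mathcal{T}(G)$. First, $P_i \cup P_j$ is connected in $G$: both $P_i$ and $P_j$ are connected and there is a $G$-edge bridging them, so the induced subgraph on $P_i \cup P_j$ is connected. Second, $P_i \cup P_j \neq V$: this is precisely where the assumption $|\mathcal{P}| \geq 3$ enters, since at least one other block remains untouched and is non-empty. Hence $P_i \cup P_j \in \mathcal{T}(G)$, the other blocks are unchanged and still lie in $\mathcal{T}(G)$, and $|\mathcal{P}'| = |\mathcal{P}| - 1 < |\mathcal{P}|$, so $\mathcal{P}'$ is a conforming coarsening of $\mathcal{P}$. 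This contradicts the maximal coarseness of $\mathcal{P}$, forcing $|\mathcal{P}| \leq 2$ and combining with the lower bound gives $|\mathcal{P}| = 2$.

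I do not expect a real obstacle here; the only subtle point is keeping the two constraints of $\mathcal{T}(G)$ (connectedness and $\neq V$) straight when merging. The $\neq V$ condition is exactly what prevents the argument from collapsing the partition to a single block, and the assumption $|\mathcal{P}| \geq 3$ (as opposed to $|\mathcal{P}| \geq 2$) is precisely what buys a spare block to keep the merged set strictly smaller than $V$.
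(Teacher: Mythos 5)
Your proof is correct and follows essentially the same route as the paper's: rule out size~$1$ via $V \notin \mathcal{T}(G)$, then show any conforming partition of size $k \geq 3$ can be coarsened by merging two blocks joined by a $G$-edge, whose union is connected and (thanks to the spare block) a proper subset of $V$. Your quotient-graph construction merely formalizes the paper's one-line appeal to connectedness of $G$ to find such a pair of blocks, and you are in fact slightly more careful than the paper in explicitly checking the $P_i \cup P_j \neq V$ condition.
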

\begin{proof}
We will show that any partition $\mathcal{P}$ conforming to $G$ with  $|\mathcal{P}| \neq 2$ is not maximally coarse.   Since $V \not\in \mathcal{T}(G)$ there are no partitions of size 1.  Let $\mathcal{P} = \{V_1, \ldots, V_k\}$ be a partition of size $k \geq 3$. Since $G$ is connected, there must exist $i,j$ such that there exists an edge in $G$ between a vertex in $V_i$ and a vertex in $V_j$.  Since $V_i$ and $V_j$ are themselves connected sets, we can conclude that $V_i \cup V_ j$ is a connected set and therefore $V_i \cup V_ j \in \mathcal{T}(G)$.  Let $\mathcal{P}^{\prime}$ be the partition formed from $\mathcal{P}$ by removing $V_1$ and $V_2$ and adding $V_1 \cup V_2$.  Clearly, $\mathcal{P}$ is a refinement of $\mathcal{P}^{\prime}$ and therefore $\mathcal{P}$ is not maximally coarse.
\end{proof}

While graphs can be a useful means to define a terrain, they are not sufficient to describe any terrain.  Consider a random variable with the possible values $V =  \{Monkey, Chimp, Car, Truck, Dog, Wolf\}$.  A reasonable terrain might be $\mathcal{A} \cup \mathcal{B}$ where:
\begin{itemize}
\item$ \mathcal{A} = \{ \{Monkey, Chimp\}, \{Car, Truck\}, \\ \{Dog, Wolf\}, \{Monkey, Chimp, Dog, Wolf\} \}$
\item $\mathcal{B} = \{ \{Monkey\}, \{ Chimp\}, \{Car\},\\ \{Truck\}, \{Dog\}, \{Wolf\}\}$
\end{itemize}
In other words, the ``primates", ``vehicles", ``canines", and ``mammals" represent the averageable sets outside of the singletons.  However, this terrain cannot be induced by any graph $G$.  Having $\{Monkey, Chimp, Dog, Wolf\}$ as a connected component in $G$ requires an edge between some primate and some canine (say, Chimp and Dog).  This in turn implies that the set $\{Chimp, Dog\}$ induces a connected subgraph and is therefore in the terrain.

Nevertheless, many categorical variables, including those of a spatial nature, have a structure that is well captured by a graph-based terrain.

\section{STRUCTURED CATEGORICAL DECISION TREE}
Inspired by the examples above, and armed with the preceding definitions, we propose the following reformulation of the Decision Tree / CART.
Consider the standard supervised learning framework, where we have a set of training data  $(X_1, X_2, \ldots, X_k, Y)$.  We assume that each $X_i$ takes values in a finite set $V_i$.  Associated with each $V_i$ we have a terrain $\mathcal{A}_i$ representing the structure of the levels of the corresponding categorical variable.  Algorithm~\ref{alg1} then generalizes the Decision Tree / CART to use structured categorical variables.

\begin{algorithm}
\caption{Structured Categorical Decision Tree }\label{alg1}
\begin{algorithmic}[0]
\State {\bf Input:} Dataset of form $(X_1, X_2, \ldots, X_k, Y)$ plus an associated refined terrain $\mathcal{A}_i$ on $V_i$ (the set of possible values of $X_i$).
\State {\bf Output:} A decision tree
\State (1) For each feature $X_i$, let $\mathcal{S}_i$ be the set of partitions on $V_i$ which conform to $\mathcal{A}_i$ and let $\mathcal{S}^{\prime}_i$ be the set of partitions which are maximally coarse in $\mathcal{S}_i$. 

\State (2) For every feature $X_i$, and every partition $\mathcal{P} \in \mathcal{S}^{\prime}_i$, evaluate the split corresponding to $\mathcal{P}$ .  Let $B_1, B_2, \ldots, B_m$ be the best split.

\State (3) Split the data into $m$ sets depending on which $q$ satisfies $X_j \in B_q$.

\State (4) Recursively apply steps (1) - (3) on each branch, with the associated dataset, and with the appropriate restricted terrain $\mathcal{A}_{B_q}$  replacing $\mathcal{A}$. 

\State (5) Continue until appropriate stopping conditions are met. (e.g. maximum depth, minimum leaf size)

\State (6) If desired, apply post-processing steps to prune the tree.

\end{algorithmic}
\end{algorithm}

\begin{itemize}
\item If the terrains for all variables $X_i$ are induced by connected graphs $G_i$, then by Theorem~\ref{thm:max_coarse_binary} we know all maximally coarse partitions have size 2, and therefore the resulting decision tree will be binary.

\item There are several alternative methods of choosing the space of partitions to be considered at each step.  When the space of maximally coarse partitions is large, one may choose to evaluate only a random subset of them.  This could also serve as a regularization method to avoid overfitting.
\item This algorithm is equivalent to the standard decision tree with numerical predictors $X_i$ when we take the following steps:
\begin{enumerate}
\item Divide the real line into disjoint intervals defined by the distinct values of the variable $X_i$
\item Perceive $X_i$ as a categorical variable with these intervals as the (finitely many) possible values.
\item Define the associated terrain to be the terrain with respect to the chain graph $G$ where each interval is adjacent to its neighbors.
\end{enumerate}
Following these steps, the maximally coarse partitions will correspond exactly to the splits to the ``left" or ``right" of the distinct training set values, as in the standard decision tree.
\end{itemize}

\section{COMPLEXITY AND IMPLEMENTATION}
We built a Python/Cython implementation of Structured Categorical Decision Trees (SCDT), as well as a Gradient Boosting algorithm with such trees as the underlying learner.  Our implementation used the graph-based terrain approach described earlier.  Each variable has a defined set of values $V$ and an associated graph $G = (V,E)$ such that terrain contains precisely the connected subsets in $G$.  Our gradient boosting approach followed the methods of XGBoost (~\cite{chen2016xgboost}) in how it evaluated splits based on the first and second derivatives of the loss function.

Compared to a standard decision tree, there is additional computational complexity in two primary respects:
\begin{enumerate}
\item Determining the maximally coarse partitions
\item Evaluating each maximally coarse partition (the number of which can be much greater than $|V|$, in contrast to the standard decision tree).
\end{enumerate}

Both of these additional costs can be mitigated to keep the time complexity of the SCDT to a reasonable level for many interesting, moderately sized problems.

\begin{table}[h]
\caption{Parameters of grids and other graphs} \label{grid-table}
\begin{center}
\begin{tabular}{r|r|r|r|r|r}
\hline
\textbf{name}  & \textbf{$v$} & \textbf{$e|$} & \textbf{$|MP(G)|$} & \textbf{$|CS^{\prime}(G)|$} & \textbf{$|CS(G)|$}\\
\hline 
Gr3,3         & 9 & 12 & 53 & 79 & 218\\
Gr3,4         & 12 & 17 & 146 & 425 & 1126\\
Gr4,4         & 16 & 24 & 627 & 3331 & 11506\\
Gr4,5        & 20 & 31 & 2471 & 25850 & 116166\\
Gr5,5         & 25 & 40 & 16213 & 285938 & 2301877\\
Gr5,6         & 30 & 49 & 111367 & 5616968 & 45280509 \\
US49 & 49 & 107 & 4149721 & 35327031 & ? \\
CA9 & 9 & 12 & 36 & 66 & 172 \\
CA20 & 20 & 39  & 3652 & 46847 & 177528 \\
\hline
\end{tabular}
\end{center}
\end{table}
%The US49 graph has 4149721 binary splits and (31177310+4149721) total connected sets.

Determining the maximally coarse partitions requires finding every connected set $S$ in $G=(V,E)$ where $|S| \leq \left \lfloor \frac{|V|}{2} \right \rfloor$ and then checking to see that its complement is also connected.  If both $S$ and $S^C$ are connected, then the partition $\{S, S^C\}$ can be added to the set of maximally coarse partitions.  We will refer to the set of maximally coarse partitions of a graph $G$ as $MP(G)$,  the set of connected sets of $G$ as $CS(G)$ and the set of connected sets of $G$ with $|S| \leq \left \lfloor \frac{|V|}{2} \right \rfloor$ as $CS^{\prime}(G)$.  Therefore, determining the set $MP(G)$ requires searching across all items in $CS^{\prime}(G)$. 

Enumerating of the connected sets of $G$ is itself a question of active research in graph theory (e.g. ~\cite{komusiewicz2019enumerating}, ~\cite{elbassioni2015polynomial}).  The size of $CS(G)$ depends greatly on the structure of $G$, but can quickly become intractable even for graphs of moderate size.  Fortunately, the sets $CS^{\prime}(G)$ and $MP(G)$, while fast growing, do not grow as explosively as $CS(G)$. To demonstrate this, we show the values of these sets for $m \times n$ grids (denoted GRm,n), the US49 graph, and CA-9 and CA-20 (graphs of California counties that will be defined next section) in  Table~\ref{grid-table}.  We can see how quickly these numbers grow with the size of the graph, even for planar graphs.

Fortunately, this cost of determining the set $MP(G)$ can be mitigated by creating it offline.  Once created, it can be stored and reused for any other problem that uses that variable.  The challenge is that once the variable is split, we need to recalculate the set of maximally coarse partitions for each subgraph.  Fortunately, since each subgraph is smaller than the parent, creating the set $MP(H)$ for a subgraph $H$ is considerably less expensive than it is for the initial graph $G$.

%The complexity present in enumeration can be mitigated by enumerating all connected sets of the graph offline.  This step can be done once, the results stored in memory or in a file, and reused at every tree until the variable is chosen for a split.  At this point, the set of values is split into $V_1$ and $V_2$, and for each connected set, we must test whether it is a subset of either of those sets.  Thus it takes a linear number of subset checks to maintain the connected set list after a split on that variable.

%Given the list of connected sets, determining the maximally coarse partitions can be done by checking, for each item in the list, whether or not its complement is also in the list.  Thus this step requires a linear number of "membership tests" (which can themselves be linear, logarithmic, or constant depending on implementation).  In a similar fashion, this can be done offline for a graph $G$, and then only needs to be redone after split is done on that variable.  The complexity is also mitigated by the fact that the graph is necessarily smaller after it is split.

The second additional complexity cost comes from evaluating each maximally coarse partition.  However, this can be mitigated by choosing only a small random subset to evaluate.  We can set a parameter called \emph{max\_splits\_to\_search} such that if the size of the set of maximally coarse partitions is greater than \emph{max\_splits\_to\_search}, we choose a random subset of partitions (of that size) and only evaluate those splits.  As we will see in the next section, we can keep this parameter rather small and still get performance comparable to or better than the exhaustive search.

\section{EXPERIMENTS AND RESULTS}
To demonstrate this algorithm on a practical problem, we collected weather data from all available weather stations in California from the years 2000-2019.  These data were obtained from NOAA via their website data search tool (https://www.ncdc.noaa.gov/cdo-web/).  Each row represents a daily summary from a particular weather station available in those counties from 2000-present.  The raw data was highly unbalanced as some counties contain more stations than others, so we subsetted the data to include an equal number of observations from each county (19,232 to be precise). Our goal was to predict the probability of rain on a given day using only two predictor variables: the month of the observation and the county where the observation occurred.  We chose log-loss (i.e. negative maximum log-likelihood divided by the number of test set points) as our metric of interest, since our goal was to estimate accurate probabilities of rainfall.

\begin{figure}[t]
\centering
\includegraphics[scale=0.75]{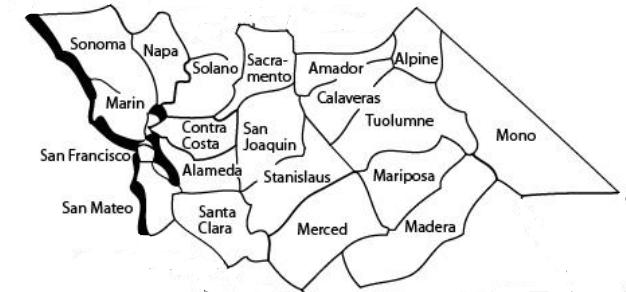}
\caption{Selected 20 counties of California.}
\label{fig:CA_20_County}
\end{figure}

The geographical structure of the counties is complex, so this problem is a good candidate to demonstrate how exploiting categorical structure could improve predictive performance.  The 20 counties selected range from the coast to the mountains to the desert.  Additionally, the circular structure of the months is another example where traditional methods are suboptimal in capturing the categorical structure.

We implemented a gradient boosted trees algorithm using several variants of decision trees to compare the performances of different ways of handling the categorical variables ``Month" and ``County".  One advantage of this problem and data set is that the amount of data was very large compared to the number of different month-county combinations.  In fact, we possess enough data to accurately calculate the mean probability of rainfall for each month-county combination separately to a high degree of precision, so effectively we ``know" the right answer.  In this way, we were able to compute the ``optimal" log-loss - i.e. what log-loss you would get on the test set if you knew the actual distribution.  Consequently, we can run the different variants of our algorithms on training data sets of different sizes to see not only how they compare with each other, but how far they are from the ``optimal" solution.

We compared 4 models in this evaluation:
\begin{enumerate}
\item \emph{One-Hot}: Build each decision tree using one-hot-encoded versions of month and county.
\item \emph{Ordinal}: Build each decision tree using the numerical encoding of month (i.e. January: 1, July: 7) and ordinally ranking the counties by their mean probability of rainfall in the training set.
\item \emph{Structured}: Build each decision tree using the structured categorical approach: with the ``circular" encoding of month and the adjacency graph representation of the counties.
\item \emph{Siloed}: Calculate the mean for each county-month combination from the training data. This is a very naive model, but for large enough training data sets it approaches optimality. It is useful to consider as it demonstrates the level of signal in the training data.
\end{enumerate}

\begin{figure}[t]
\centering
\includegraphics[scale=0.35]{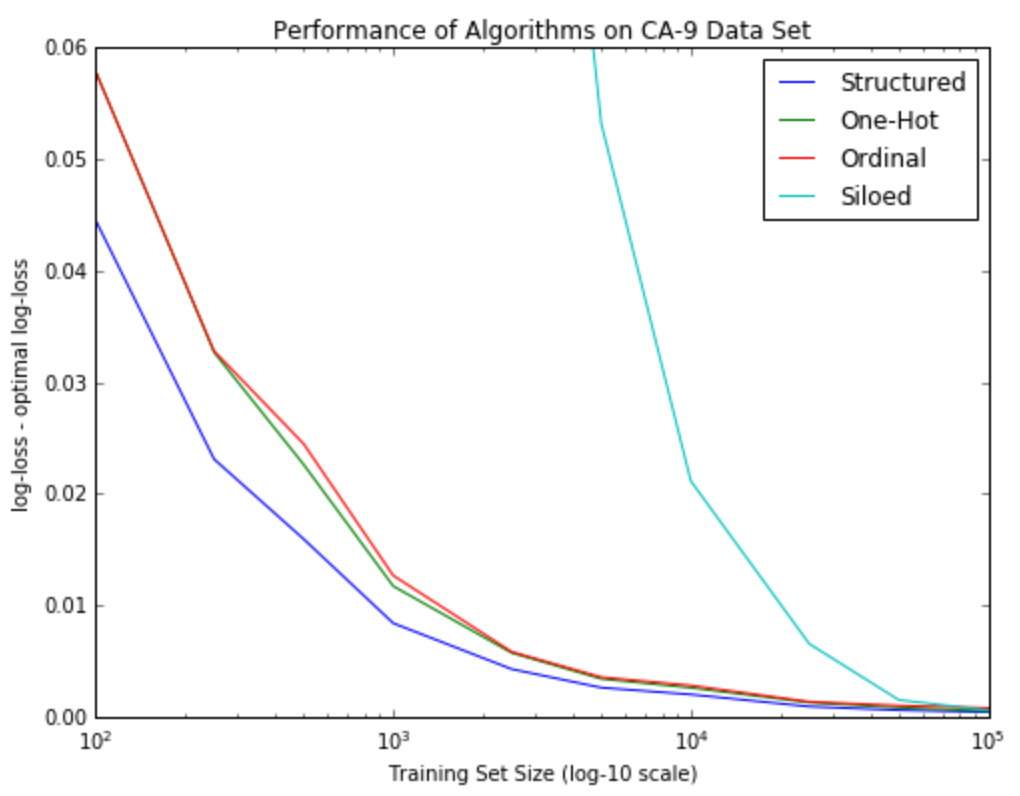}
\caption{Results of Algorithms on CA-9 data set}
\label{fig:CA-9-Results}
\end{figure}

We compared these models on two subsets of counties: \emph{County-20} representing the 20 counties pictured in Figure~\ref{fig:CA_20_County} and \emph{County-9} representing only the 9 ``Bay Area" counties (San Francisco, San Mateo, Santa Clara, Marin, Napa, Sonoma, Solano, Alameda, Contra Costa).  The \emph{County-20} dataset contains 386,460 observations while the \emph{County-9} dataset contains 173,907 observations.  We set aside 40\% of the data for testing (quantity of training data was not an issue). Then, for each subset of counties, we examined the performance of the different algorithms using training sets of sizes ranging from 100 to 100,000.  We repeated the randomization of train/test sets 3 times and averaged the results.  To reduce variation based on parameter settings, we tried maximum depths of 2 and 3, chose a small learning rate, a large number of trees, checked the performance of each model on the test set every 20 iterations.  We kept the best score achieved by the model for each train/test combination, and then averaged the results over the test sets for each training set size and algorithm.  

In Figure~\ref{fig:CA-9-Results} and Figure~\ref{fig:CA-20-Results} we see the average log-loss as a function of training set size for each of the models.  In both cases the Structured method clearly outperforms the One-Hot and Ordinal methods.  The discrepancy is larger on the smaller data sets and remains significant through the larger data sets.  it only disappears as we reach the largest sizes where even the Siloed approach converges to optimal.  This is in line with expectations. With smaller data sets there are large gains to be had by ``smartly" aggregating the different counties (or months) together.  As data becomes more plentiful, the gains diminish in strength.

\begin{figure}[t]
\centering
\includegraphics[scale=0.35]{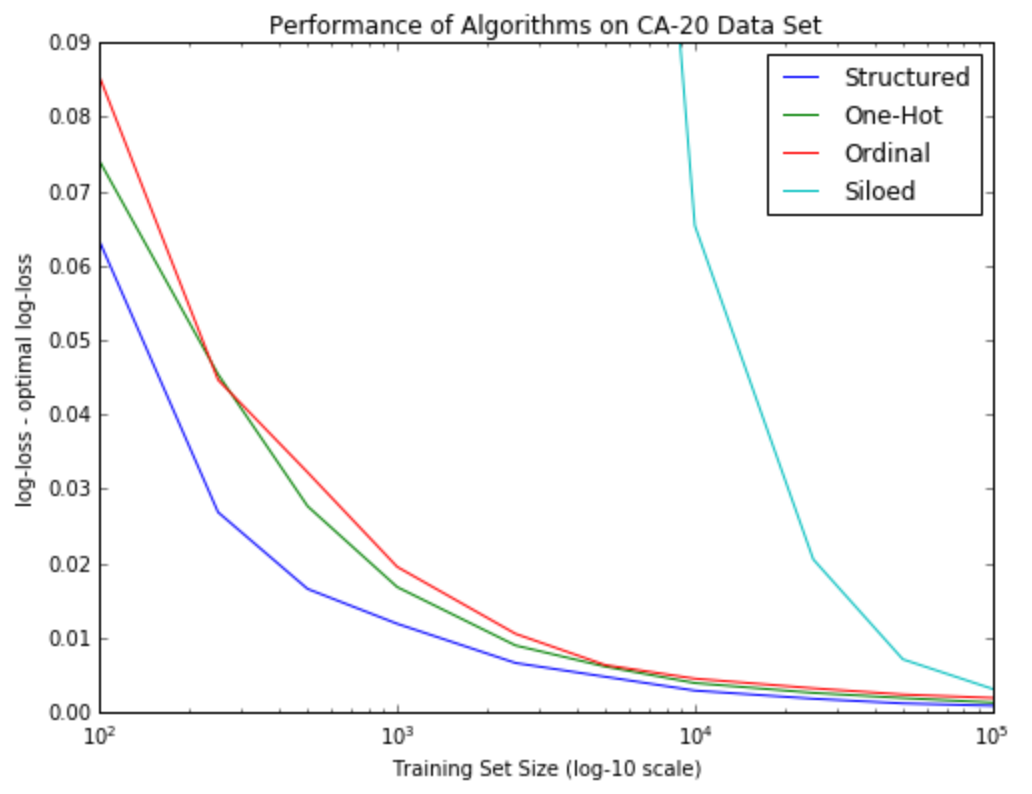}
\caption{Results of Algorithms on CA-20 data set}
\label{fig:CA-20-Results}
\end{figure}

The Structured variants shown were implemented with \emph{max-splits-to-search} set to 5 (for the CA-9 data set) and 20 (for the CA-20 data set).  That is, we were able to achieve this performance by searching only a very tiny fraction of the 36 and 3652 available splits in the County variable (and the 55 splits in Month).  We also explored increasing the value of the \emph{max-splits-to-search} to 10 and unlimited (for the CA-9 data set) and 100 and 500 (for the CA-20 dataset).  The results are shown in  Figures~\ref{fig:max-splits-to-search-9} and~\ref{fig:max-splits-to-search-20}. Interestingly, increasing the number of splits the algorithm searched at each node did not significantly improve performance, and in fact, it made the performance worse on the smaller data sets.  This is likely due to two main factors.  First, making very few splits available served to prevent overfitting on the smaller data sets.  This effect may be exacerbated by the fact that we did not include any shrinkage methods  to regularize (such as the penalized likelihoods used in XGBoost), but rather relied on building relatively shallow trees.  Second, since the boosting algorithm created hundreds of trees, the range of splits considered by the whole ensemble far exceeded the range at any particular node.  It is worth noting, however, that all variants of the Structured approach noticeably outperformed the One-Hot and Ordinal methods.

The strong performance of the algorithm while searching very few partitions in a very large space was an unexpected result.  However, it is very promising news for this approach, as it demonstrates that structured categorical approaches can be very effective without having to exhaustively search the larger partition spaces created by the structure on the categorical variables.  

To give a benchmark of the actual run-time of the algorithm, training the model with 1000 depth 3 trees on 100,000 data points for the CA-20 data set took 25.2 minutes on a 2019 MacBook Pro with a 2.4GHz Intel i9 processor.  This was without using any parallelization or GPU computation, which most boosting packages employ to improve computation time.

\begin{figure}[t]
\centering
\includegraphics[scale=0.33]{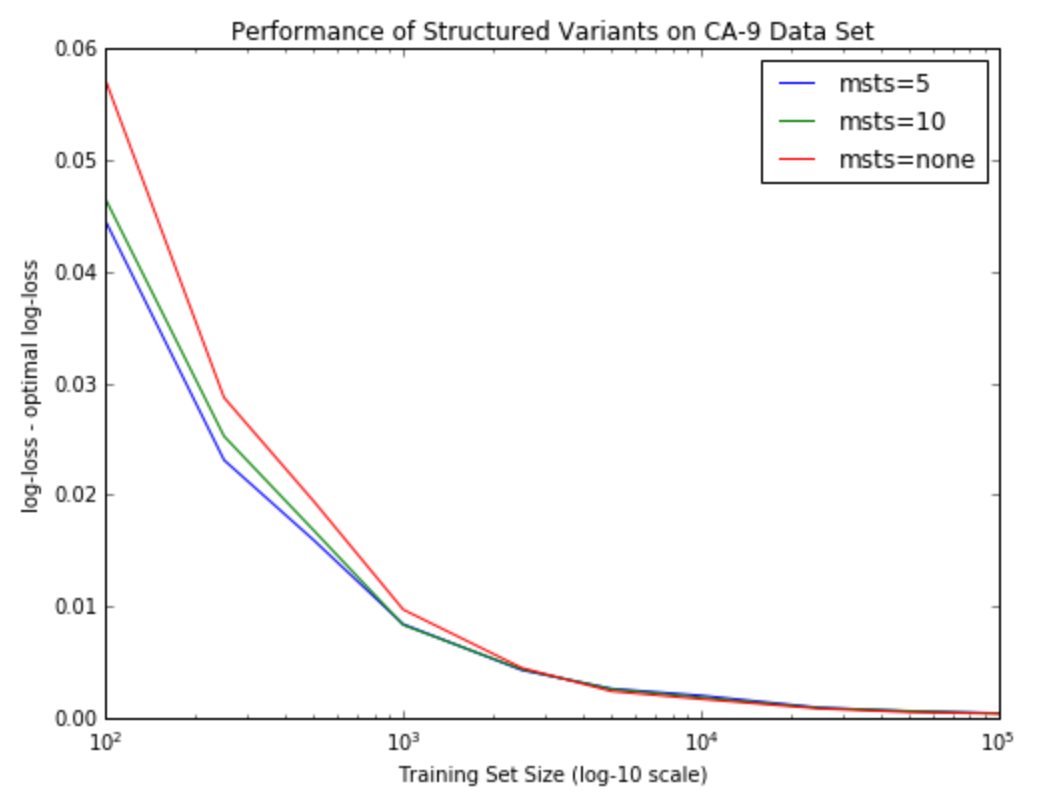}
\caption{Performance of SCDT on CA-9 for different values of \em{max-splits-to-search}}
\label{fig:max-splits-to-search-9}
\end{figure}

%\begin{figure}[b]
%\centering
%\includegraphics[scale=0.4]{CA_9_Results_3.png}
%\caption{Number of trees needed for best performance}
%\label{fig:num-trees-needed-9}
%\end{figure}

\section{SUMMARY AND DISCUSSION}

We examined the notion of structure in categorical variables and determined that existing approaches fail to take advantage of this structure when it is not ordinal in nature.  We gave several natural examples of categorical variables which contain structure that is not ordinal.  Motivated by thought experiments about decision trees, we defined a mathematical framework for defining structure on categorical variables via a terrain, which is essentially a set of subsets of the possible values of the variable that it ``might make sense to average over".  Using this framework, we precisely defined a new variant of the decision tree that is able to exploit the structure in categorical variables.  We implemented this Structured Categorical Decision Tree into a Gradient Boosting algorithm, and demonstrated improvement on a prediction problem that contained complex structure of a spatial and temporal nature.  We further discovered that an exhaustive search of the broader partition space was not necessary to achieve excellent performance.  In fact, just searching a tiny fraction of the available splits improved performance considerably, and increasing this amount resulted in poorer performance, likely due to overfitting.

Broadly, this work demonstrates that there is useful signal in the structure of categorical variables that existing methods fail to exploit.  This opens up numerous directions for future research, two of which are of particular interest.  First, can we exploit structure in the target variable in the same way as we exploited structure in the predictor variables?  This may be of particular interest in image classification problems with a large number of classes.  For example, can we improve performance by incorporating the knowledge that monkeys and chimpanzees are ``similar"?  The second direction involves further development the mathematical foundations of structured categorical variables, particularly from an information-theoretic point of view.

\begin{figure}[t]
\centering
\includegraphics[scale=0.33]{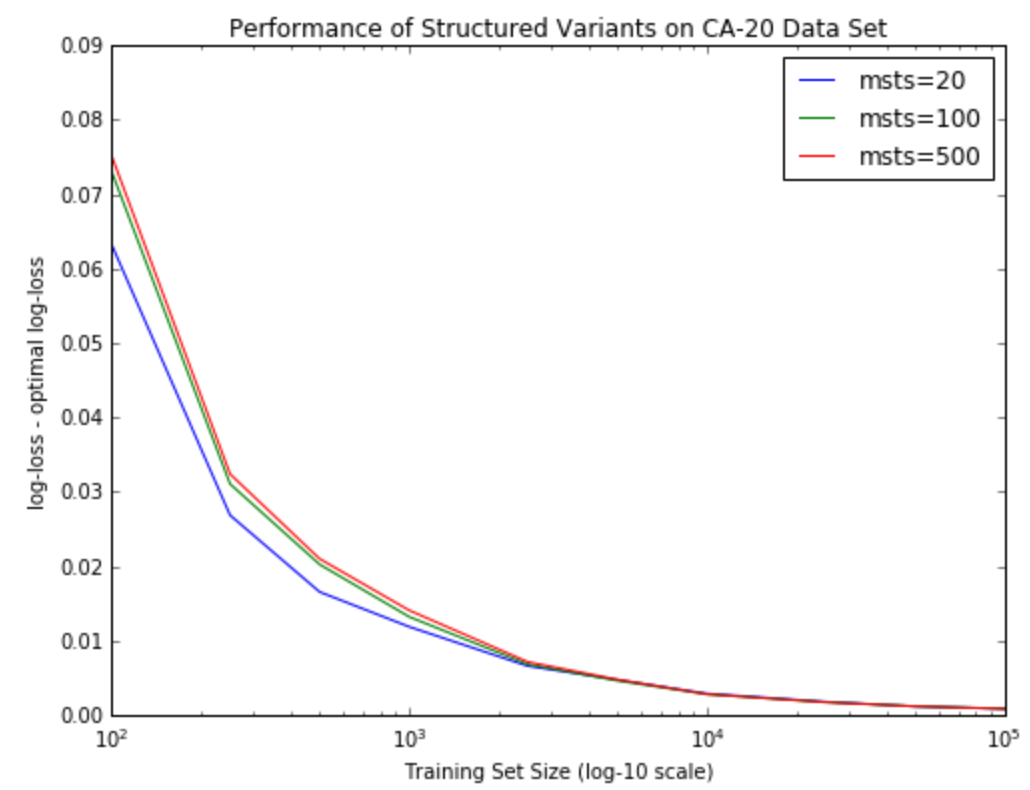}
\caption{Performance of SCDT on CA-20 for different values of \em{max-splits-to-search}}
\label{fig:max-splits-to-search-20}
\end{figure}

%\begin{figure}[b]
%\centering
%\includegraphics[scale=0.4]{CA_9_Results_3.png}
%\caption{Number of trees needed for best performance}
%\label{fig:num-trees-needed-20}
%\end{figure}

\bibliography{962}

\end{document}